\documentclass[11pt]{article}

\usepackage[margin=1in]{geometry}

\usepackage[utf8]{inputenc}

\usepackage[T1]{fontenc}

\usepackage{microtype}

\usepackage{url}

\usepackage{graphicx}
\usepackage{float}

\usepackage{subcaption}

\graphicspath{{./Figures/}}

\usepackage{booktabs}

\usepackage{makecell}

\usepackage{amsmath}

\usepackage{amssymb}

\usepackage{mathtools}

\usepackage{amsthm}

\usepackage{dsfont}

\usepackage{nicefrac}

\usepackage{algorithm}

\usepackage{algorithmic}

\usepackage[numbers,sort&compress]{natbib}

\usepackage[colorlinks=true, linkcolor=blue, citecolor=blue, urlcolor=blue]{hyperref}

\usepackage[capitalize,noabbrev]{cleveref}

\usepackage{enumitem}

\usepackage[textsize=tiny]{todonotes}

\theoremstyle{plain}

\newtheorem{theorem}{Theorem}[section]

\newtheorem{lemma}[theorem]{Lemma}

\theoremstyle{definition}

\theoremstyle{remark}

\DeclareMathOperator*{\argmax}{arg\,max}

\DeclareMathOperator*{\argmin}{arg\,min}

\newcommand{\M}{\mathcal M}

\newcommand{\PP}{\mathcal P}

\newcommand{\LL}{\mathcal L}

\newcommand{\N}{\mathcal N}

\newcommand{\OO}{\mathcal{O}}

\newcommand{\VK}{\mathcal{V}^K}

\newcommand{\VKM}{\mathcal{V}^{K-1}}

\newcommand{\vv}{\mathbf{v}}

\def\vx{{\mathbf{x}}}

\def\vy{{\mathbf{y}}}

\DeclareMathOperator{\E}{\mathbb{E}}

\DeclareMathOperator{\R}{\mathds{R}}

\DeclareMathOperator{\AUC}{\text{AUC}}

\newcommand{\RNum}[1]{\uppercase\expandafter{\romannumeral #1\relax}}

\usepackage[margin=1in]{geometry}
\usepackage{authblk}

\usepackage[utf8]{inputenc}
\usepackage[T1]{fontenc}

\begin{document}
\begin{center}
    \hrule height 0.5pt 
    \vspace{0.2in} 
    
    {\LARGE \textbf{Graph-based Semi-Supervised Learning via Maximum Discrimination}} \\
    
    \vspace{0.2in} 
    
    \hrule height 0.5pt 
    \vspace{0.3in} 

    \textbf{Nadav Katz}\textsuperscript{1} \quad
    \textbf{Ariel Jaffe}\textsuperscript{1} \quad
    \\[1em] 
    
    \textsuperscript{1}Department of Statistics and Data Science \\
    The Hebrew University of Jerusalem
    \\[0.5em] 
    
    \texttt{\{nadav.katz3, ariel.jaffe\}@mail.huji.ac.il}
\end{center}

\begin{abstract}
 Semi-supervised learning (SSL) addresses the critical challenge of training accurate models when labeled data is scarce but unlabeled data is abundant. Graph-based SSL (GSSL) has emerged as a popular framework that captures data structure through graph representations. 
Classic graph SSL methods, such as Label Propagation and Label Spreading, aim to compute low-dimensional representations where points with the same labels are close in representation space. 
Although often effective, these methods can be suboptimal on data with complex label distributions. 
In our work, we develop AUC-spec, a graph approach that computes a low-dimensional representation that maximizes class separation. We compute this representation by optimizing the Area Under the ROC Curve (AUC) as estimated via the labeled points. We provide a detailed analysis of our approach under a product-of-manifold model, and show that the required number of labeled points for AUC-spec is polynomial in the model parameters. Empirically, we show that AUC-spec balances class separation with graph smoothness. It demonstrates competitive results on synthetic and real-world datasets while maintaining computational efficiency comparable to the field's classic and state-of-the-art methods.  
\end{abstract}

\section{Introduction}
Semi-supervised learning (SSL) algorithms are motivated by applications where labeled samples are expensive to obtain, whereas unlabeled samples are cheap and abundant.
Given a small number of labeled samples and a large number of unlabeled ones, the main challenge is to leverage both the labeled and unlabeled samples to train an accurate classifier.

In recent decades, a large body of research has focused on the development of various approaches to SSL \cite{10.7551/mitpress/6173.003.0017}. In model-based methods, one assumes that the data was generated according to some generative model, such as a Gaussian or other mixture models. Inference of the model parameters can be done in an unsupervised manner by applying algorithms such as Expectation-Maximization (EM) or gradient descent to maximize the model's likelihood function.
In cases where the statistical model provides a good approximation to the sample distribution, such an approach can be highly effective. Another recent body of work derived semi-supervised methods based on augmented samples with high-confidence labeling \cite{sohn2020fixmatch}. 

Graph-based, data-driven methods are a different class of SSL algorithms that do not rely on a predefined statistical model. Instead, the unlabeled samples are used to compute a graph whose connectivity profile captures the latent structure of the dataset. For example, by analyzing the graph spectra, one can detect clusters of observations or learn the shape of a low-dimensional manifold \cite{von2007tutorial,coifman2006diffusion}.

One of the first graph-based algorithms for SSL was developed in \cite{belkin2004semi}. The algorithm computes a low-dimensional representation for each observation based on the leading eigenvectors of the graph Laplacian matrix. 
Then, a classifier or regressor is trained to predict the label based on the new set of features. \citet{zhu2002learning} developed the Label Propagation (LP) approach, 
where one or several predictive vectors are obtained by applying power iterations with a graph random walk operator, while fixing the elements that correspond to the labeled samples to a predefined value.
\citet{zhu2003semi} derived a related optimization-based approach that computes a classifier by minimizing a quadratic cost function on the graph-Laplacian matrix, while keeping the constraints imposed by the labeled examples.  \citet{zhou2003learning} introduced \textit{label spreading}. While LP strictly clamps the labeled points to their given values, label spreading introduces a regularization parameter that allows more flexibility in the labeled points during the propagation process.

A significant limitation of LP and its optimization variants is that they may yield uninformative results when the number of unlabeled samples is very large. Specifically, the output vector may become  \textit{spiked}, matching the labels at the labeled points but nearly constant elsewhere
 \cite{nadler2009semi}.
To address this issue, several works developed variations of LP. \citet{el2016asymptotic} analyzed the asymptotic behavior of the solution to an $\ell_p$ Laplacian optimization problem and showed that, under minor assumptions, the solution is continuous. \citet{zhou2011semi} derived a higher-order graph-regularization scheme and showed that it addresses the problem raised by \cite{nadler2009semi}.
\citet{calder2020poisson} show that LP fails when the propagation time to a given point is beyond the \textit{mixing time} of the random-walk process determined by the graph. To address this, the authors developed Poisson learning that  considered only short walks along the graph. \citet{holtz2023semi} showed that computing multiple prediction vectors under an orthogonality constraint improves the outcome of LP. 

All the above methods aim to compute prediction vectors whose values are similar for points sharing the same label. As we demonstrate in our paper, in many cases, this requirement is overly restrictive and leads to suboptimal results. Here, we take a different approach that promotes \textit{separation} between labeled samples with different labels. With this approach, we make the following contributions: 
(i) We develop a graph-based SSL method termed AUC-spec that maximizes an AUC-based score function. (ii) We show theoretically, under a cluster and manifold setting, that the outcome of AUC-spec is smooth with respect to the graph and does not become spiked for a large number of samples. (iii) For a product-of-manifold model, we compute a probabilistic bound on the number of labeled points required for obtaining a non-trivial predictor. (iv) We provide empirical evidence that our method obtains results that are competitive and often outperform competing, state-of-the-art graph-based methods, with a significantly improved runtime. 

\subsection{Related work: AUC-based optimization for SSL}

The adoption of the Area Under the ROC Curve (AUC) as an optimization objective in semi-supervised learning (SSL) arises from its inherent advantage in moderate class-imbalanced scenarios, where traditional accuracy metrics fall short. AUC evaluates a classifier’s probability to rank positive samples above negative ones \cite{cortes2003auc}.
In recent years, methods for optimizing AUC have been developed in many domains (e.g., \cite{rakotomamonjy2004optimizing}, \cite{herschtal2004optimising}, \cite{wang2016auc}). For example, supervised AUC optimization methods such as AUC-LS use approximations of the AUC metric to enable gradient-based optimization \cite{fujino2016semi}.

In recent years, several works have incorporated AUC optimization in the framework of SSL. Several works combined AUC loss functions with model-based likelihood functions and estimated model parameters by optimizing over an integrated loss function \cite{fujino2016semi}. Other works developed model-free methods that utilize unlabeled data to improve the estimate of the AUC score. 
For example, inspired by transductive learning, \cite{wang2015optimizing} developed a method to compute a boundary between two classes while simultaneously ranking unlabeled points according to their location with respect to the boundary. \cite{sakai2018semi} and \cite{xie2018semi} developed a method that estimates AUC by the positive labeled samples while treating the unlabeled points as negative, and vice versa. A similar approach is taken in \cite{iwata2020semi} to compute an estimate for the partial AUC score. 
\cite{xie2024weakly} provides a unified approach to SSL with other weakly supervised settings, such as the existence of noisy labels.

In our work, we incorporate AUC methods with graph-based methods. Thus, as mentioned, we do not rely on a predetermined generative model. The goal is to obtain a low-dimensional representation of the data that reflects the underlying structure, as captured by the graph, while maintaining separation between samples with different labels via AUC scores. 
In the following sections, we provide the necessary background on graph-based methods and describe our  approach in detail.

\subsection{Problem setting and preliminaries}
\label{sec:perliminaries}


Throughout the paper we use bold notation for vectors and capital letters for matrices.
Let $\mathcal{X} = \left\{\mathbf{x_i}\right\}_{i=1}^n$ be a dataset of $n$ observations, where $\mathbf{x}_i \in \mathbb{R}^d$ is the feature vector of the $i$-th data point. Among these, only a subset $\mathcal{L} \subset \{1, ..., n\}$ is labeled, with corresponding labels $\mathbf y_{\mathcal{L}} = [\mathbf{y}_i]_{i \in \mathcal{L}}$. 
Our goal is to estimate the labels for the unlabeled samples. 

\section{Graph-based methods for SSL}

In this section, we provide the necessary background on graph-based SSL algorithms. We specifically focus on two classical methods: (i) label propagation and (ii) the leading eigenvectors approach.

 


\paragraph{Label Propagation.} Let $G = (V, E)$ be an undirected weighted graph, where each node $v_i$ corresponds to a single observation $\vx_i$. The weight between two nodes $\vv_i,\vv_j$, denoted $W_{ij}$, is computed by,
\begin{equation}\label{eq:adaptive_scaling}
W_{ij}=\exp\left(\frac{-||\mathbf x_i-\mathbf x_j||^2}{d_K(\mathbf x_i)d_K(\mathbf x_j)}\right), 
\end{equation}
where $d_K(\mathbf x_i)$ is the distance between $\mathbf x_i$ and its $K$-th nearest neighbor.
This adaptive scaling approach for edge weights helps capture the local neighborhood structure more effectively, adjusting for variations in data density in the feature space \cite{NIPS2004_40173ea4}.
Using the kernel matrix $W$, we compute the degree matrix $D$, a diagonal matrix where $D_{ii} = \sum_{j} W_{ij}$. We denote by $L$ the unnormalized Laplacian matrix given by $L= D-W$, and by $L_{rw}$ the random walk operator given by $ = D^{-1}W$.

One of the most influential methods in GSSL was derived by \citet{zhu2003semi}. The labeled points are used as constraints in the following optimization problem.
\begin{equation}\label{eq:ssl_label_prop}
    \hat {\mathbf{v}} =  \min  \left(\mathbf{v}^T L \mathbf{v}\right) \quad \text{subject to} \quad \mathbf{v}_i = \mathbf{y}_i \qquad \forall \space i\in \mathcal{L},
\end{equation}
The loss function in Eq. \eqref{eq:ssl_label_prop}  encourages smoothness over the graph while respecting the known labels. For a binary setting, the labels for the unlabeled points are computed by rounding the elements of $\vv$. \citet{zhu2002learning} derive Label Propagation (LP), which can be viewed as a numeric solution to  Eq. \eqref{eq:ssl_label_prop}.  LP combines power iterations with the random-walk operator while enforcing constraints over the labeled points at each iteration. Alg. \ref{alg:label_propagtion} provides a pseudocode for LP for the case of binary labels.
\begin{algorithm}[tb]
\caption{Label Propagation for binary labels}
\label{alg:label_propagtion}
\textbf{Input}:  $L_{rw}$ (random walk operator), $\mathbf{y}_\mathcal{L}$ (labels for subset of points), $\delta$ (convergence threshold)
\\
\textbf{Output}: Estimated labels $\hat \vy$. 
\begin{algorithmic}[1] 
\STATE Initialize $\mathbf{v}^{(0)}$, by setting elements $v_i$ at random. 
\WHILE{$\|v^{(t+1)}-\vv^{(t)}\|>\delta$}
\STATE $\mathbf{v}^{(t+1)} \gets L_{rw}\mathbf{v}^{(t)}$
\STATE $\mathbf{v}^{(t+1)}_\mathcal{L} \gets \mathbf{y}_\mathcal{L}$
\ENDWHILE
\STATE $\hat \vy \gets \mathds 1(\vv^{(t+1)} \geq 0.5)$
\end{algorithmic}
\end{algorithm}

\paragraph{Leading eigenvectors} 
This method, derived by \citet{belkin2004semi} computes the solutions to the generalized eigenvector problem,
\begin{equation*}
    L\mathbf{v} = D\lambda\mathbf{v}.
\end{equation*}
The $k$ solutions that correspond to the smallest eigenvalues are set as new features for the regression or classification task. 
A supervised learning algorithm is then trained based on the labeled points. 
The underlying assumption is that these eigenvectors capture the most important structural information of the data, and that the label function lies in the span of these eigenvectors.

\paragraph{Limitations of LP and leading Laplacian eigenvectors methods, an illustrative example.}


In LP, the optimization process aims for a solution that is smooth with respect to the graph, with values equal to the labels at the labeled points. Such a solution exists, for example, in scenarios where class labels are aligned with well-separated clusters. 
However, in other settings, such an alignment may not hold. For example, labels may be determined by clusters with partial overlap. Alternatively, a label may be defined by a continuous, latent parameter. Thus, a vector that is smooth with respect to the graph, and equals the labels at specific points may not exist. 



%
In the leading eigenvectors method developed in \cite{belkin2004semi}, we assume that a predefined number of  Laplacian eigenvectors capture the necessary information for accurate classification. This assumption, however, may not hold in many important applications. Specifically, the information required for discriminating between classes may reside deeper in the spectrum.

For illustration, consider the Ring of Gaussians dataset  created by \citet{holtz2023semi}. This dataset consists of eight Gaussian clusters, positioned equidistantly along a circular ring in a 2D plane, with a radius $R=5$. The mean of each Gaussian is given by $\mu_i = \left(R\cos \theta_i, R\sin \theta_i \right)$, where $\theta_i = \frac{2\pi i}{8}$ and the points are sampled from the distribution $\mathcal{N}(\mu_i, \sigma^2I)$, with $\sigma^2 = 0.5$. Each cluster contains an equal number of points, and the labels are assigned alternately as $1$ or $0$, reflecting a binary classification setup. 
Figure \ref{fig:ring_of_gaussians_illustration} (left) provides a visualization of this dataset with $3000$ points.

\begin{figure*}
    \centering
        \centering
        \includegraphics[width=0.9\linewidth]{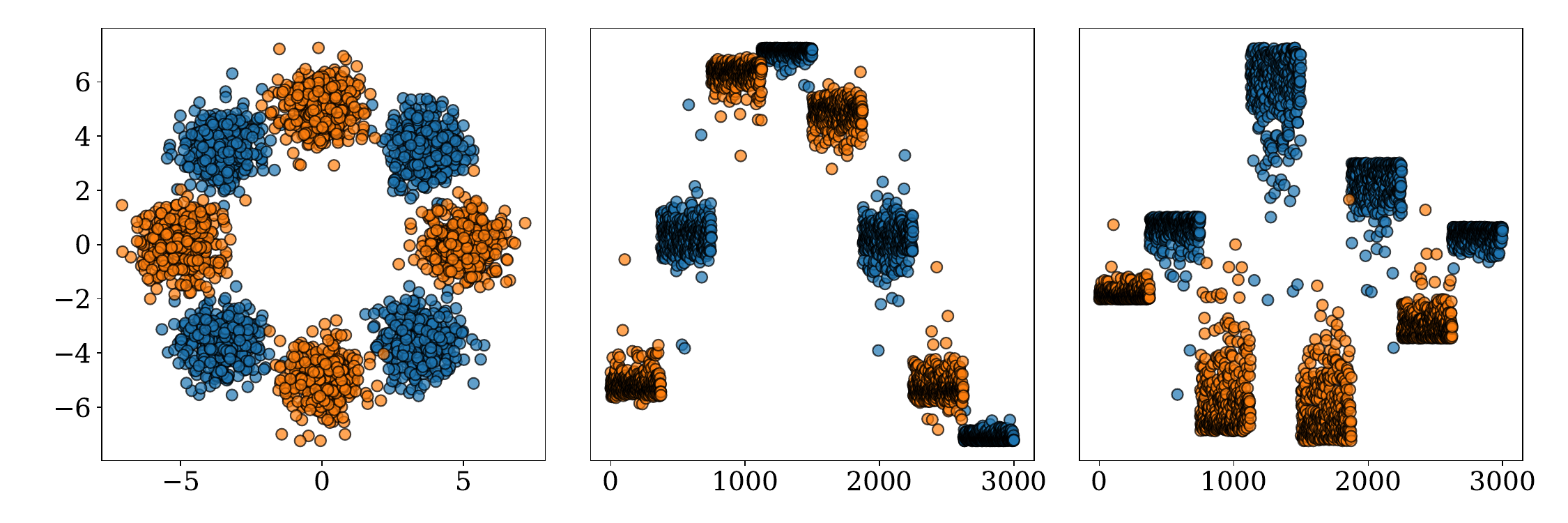} 
    \hfill
    \hfill
    \caption{Ring of Gaussians dataset, colored by class value with labeled points in hand colored by red}
    \label{fig:ring_of_gaussians_illustration}
\end{figure*}

The ring-of-Gaussians dataset illustrates several of the limitations of current approaches. Here, adjacent clusters have different labels despite the proximity of some of their points in the feature space. The leading (non-trivial) eigenvector captures the global structure of the ring, but does not align with the class boundaries. The eigenvector that best separates the classes is not among the first few vectors and happens to be the 7th in this case, see Fig. \ref{fig:ring_of_gaussians_illustration}.
In addition, even though the 7th vector discriminates between classes almost perfectly, points with similar labels do not necessarily correspond to elements with similar values. Thus, enforcing the values of a subset of labeled points is too restrictive and may yield a suboptimal solution. 


These observations highlight a critical gap in addressing real-world challenges. 
To overcome these challenges, we propose a novel graph-based approach. Rather than enforce specific values for the labeled points, our method computes a representation for each point that aims to separate labeled points with different labels. To that end, we formulate an optimization problem that leverages the Area Under the ROC Curve (AUC) as an optimization objective. Our AUC-guided spectral learning algorithm is introduced in the next section.

\section{Method}
We first describe our approach for the binary case, and then extend it to the multi-class classification settings.

\subsection{AUC-Guided Spectral Optimization for binary classification}
\label{subsec:binary_method}
Our approach is based on combining two criteria: (i) smoothness over the graph, as measured by the quadratic form over the Laplacian $L$, and (ii) The area under the ROC Curve (AUC) with respect to the labeled points. 
Let $J\left(\mathbf{v}\right)$ denote the \textit{ideal}, non-convex objective function given by
\begin{equation}\label{eq:objective}
    J\left(\mathbf{v}\right) = \mathbf{v}^TL\mathbf{v} - \gamma \text{AUC}\left(\mathbf y_\mathcal{L}, \mathbf{v}_\mathcal{L}\right).
\end{equation}
We denote by $\mathcal P$ and $\mathcal N$ the set of positive and negative labeled samples, respectively. 
The term $\text{AUC}\left(\mathbf y_\mathcal{L}, \mathbf{v}_\mathcal{L}\right)$ in the RHS of Eq. \eqref{eq:objective} is equal to 
\[
\text{AUC}\left(\mathbf y_\mathcal{L}, \mathbf{v}_\mathcal{L}\right) = \E_{i\in \mathcal{P}, j\in \mathcal{N}} [\mathds{1}(\vv_i > \vv_j)].
\]
where the expectation is taken over random pairs of nodes $i\in \mathcal{P}, j\in \mathcal{N}$, and $\mathds{1}(\cdot)$ is the indicator function.
Since the indicator function is non-differentiable, we approximate its value by using the sigmoid function $\sigma(z)=\frac{1}{1+e^{-z}}$:
$$\text{AUC}\left(\mathbf y_\mathcal{L}, \mathbf{v}_\mathcal{L}\right) \approx \mathbb{E}_{i\in \mathcal{P}, j\in \mathcal{N}}\left[\sigma\left(\mathbf{v_i}-\mathbf{v_j} \right)\right].$$

As a numeric solution to the problem in Eq. \eqref{eq:objective}, we incorporate power iterations, modified with a gradient-based update rule:
\begin{align}
    &\mathbf{v}^{(t+1)} = \mathbf{v}^{(t)} 
    \notag \\
    &+ \gamma\left(L_{rw}\mathbf{v}^{(t)} + \nabla\left(\frac{1}{|\mathcal P||\mathcal N|}\sum_{i\in \mathcal P }\sum_{j\in\mathcal N}\sigma\left(\mathbf{v}^{(t)}_i-\mathbf{v}^{(t)}_j \right)\right)\right)
\end{align}
Let $\sigma_{ij}=\sigma(\mathbf{v}_i - \mathbf{v}_j)$. The gradient for labeled nodes equals,
\begin{equation}\label{eq:auc_grad}
\big(\nabla\text{AUC}\left(\mathbf{y}_\mathcal{L}, \mathbf{v}_\mathcal{L}\right)\big)_i = 
\begin{cases}
    \frac{1}{|\mathcal P||\mathcal N|}\sum_{j\in\mathcal N}\sigma_{ij}(1-\sigma_{ij}), & i\in \mathcal P\\

    \frac{-1}{|\mathcal P||\mathcal N|}\sum_{i\in\mathcal P}\sigma_{ij}(1-\sigma_{ij}), &  i\in \mathcal N 
    \\
    0 & \text{o.w.}
\end{cases}
\end{equation}

After convergence, the labels are computed via $\hat \vy_i = \mathds 1(\vv_i \geq 0)$. 
Alg. \ref{alg:auc_power_method} presents pseudocode for our algorithm. 
\begin{algorithm}[h]
\caption{AUC-Guided Spectral Optimization (AUC-Spec)}
\label{alg:auc_power_method}
\textbf{Input:} Graph Laplacian $L_{rw}$, label vector $y_\LL$, step size $\gamma$, tolerance $\text{tol}$
\\
\textbf{Output:} Estimated labels $\hat \vy$
\begin{algorithmic}[1]
\STATE Initialize $\mathbf v^{(0)} \gets (-1, \dots, y_\mathcal{L}, \dots, -1)$
\STATE $\mathbf v^{(0)} \gets \mathbf v^{(0)} / \|\mathbf v^{(0)}\|$
\WHILE{$\|\mathbf{v}^{(t+1)} - \mathbf v^{(t)}\| > \text{tol}$}
    \STATE $(\nabla \text{AUC})_i \gets 
    \begin{cases}
    \frac{1}{|\mathcal P||\mathcal N|}\sum_{j\in\mathcal N}\sigma_{ij}(1-\sigma_{ij}) & i \in \mathcal P \\
    \frac{1}{|\mathcal P||\mathcal N|}\sum_{j\in\mathcal N}\sigma_{ij}(1-\sigma_{ij}) & i \in \mathcal N \\
    0 & i \notin \LL
    \end{cases}$
    \STATE $\mathbf{v}^{(t+1)} \gets \mathbf v^{(t)} + \gamma (L_{rw} \mathbf v^{(t)} + \nabla \text{AUC})$ \COMMENT{Update step} 
    \STATE $\mathbf{v}^{(t+1)} \gets \mathbf{v}^{(t+1)} / \|\mathbf{v}^{(t+1)}|$
\ENDWHILE
\STATE $\hat \vy_i = \mathds 1(\hat \vv_i \geq 0)$. 
\STATE \textbf{Return} $\hat \vy$. 
\end{algorithmic}
\end{algorithm}


\subsection{Method for multi-class classification}\label{subsec:multiclass_method}

Our method can be extended to multi-class problems using a One-vs-Rest strategy, where each class is treated as a binary classification task against all others. For each class $c \in \{1, \dots, C\}$, we treat its labeled nodes as positive ($\mathcal{P}_c$) and all others as negative ($\mathcal N_c$). The gradient ascent iterations are modified to optimize class-specific AUC objectives at each iteration $t$ with the following steps:
\begin{itemize}[left=0pt]
    \item For each class $c$, define $\mathcal{P}_c = \{i\in \mathcal L, \mathbf{y}_i = c\}$, $\mathcal{N}_c = \mathcal{L} \setminus \mathcal{P}_c$.
    \item Compute the vector $\vv^{(c)}$ by applying Alg. \ref{alg:auc_power_method} with input $\PP_c$ and $\N_c$.
    \item For all unlabeled nodes $i$ predict the class via $\hat \vy_i = \argmax_{c} \mathbf{v^c}_i$.
\end{itemize}

\paragraph{Computational complexity.}
The computational complexity of AUC-spec is similar to other methods with standard power iterations. These  primarily involve matrix-vector multiplication at $\mathcal{O}(Cn^2)$ per iteration, where $C$ is the number of classes. For large $n$, the cost can be reduced to $\mathcal{O}(C|E|)$ when the graph is sparse.
Our method adds an AUC-based gradient update computed over all labeled positive-negative pairs, contributing an additional cost per iteration of order $\mathcal{O}(C|\mathcal P| |\mathcal N|)$. However, since this term depends solely on the number of labeled points and not the full dataset, it remains relatively small, especially under  low label rate regimes. 
In the simulations, we show that our approach converges with fewer iterations and is advantageous in terms of runtime compared to other methods.


\section{Gaussian mixture and product-manifold models}

In this section, our aim is to gain insight into the settings where our approach can improve upon label propagation. Specifically, we intend to illustrate why enforcing label values, either directly as in LP or its softer label spreading version, may yield suboptimal results.  
To that end, we use two examples: (i) A mixture of two Gaussians, and (ii) A product-manifolds model. 

\paragraph{A Gaussian mixture model.} Fig. \ref{fig:GaussianMixture} (left) shows samples generated by the following $2D$ Gaussian Mixture model. The mean value of the Gaussian equals $(1,1)$ and $(-1,-1)$. The covariance matrix of both Gaussians is set to $1.5I$ such that there is some overlap between the two Gaussians. The samples in the figure are colored according to which Gaussian they were sampled from, with $5$ labeled points colored in red. Note that there is asymmetry in the labeled points, with two labeled points from the yellow Gaussian and three labeled points from the blue Gaussian. One of the blue-labeled points is located close to the origin. 

The middle panel shows the predictions made via LP. As can be seen, the border between the positive and negative predictions does not pass through the origin. 
The random-walk interpretation of LP provides a simple explanation for this result. Following \cite{calder2020poisson}, let $s_i^{(t)}$ define a random walk process beginning at a node $i$, whose node-to-node transition probabilities are given by the elements of $L_{rw}$. Let $\tau$ be the first time the process hits a labeled point, and let $y_i^\tau$ be the label obtained at time $\tau$. The solution to the optimization problem in Eq. \eqref{eq:ssl_label_prop} is $\E[y_i^\tau]$. 

In our example, there is significant overlap between the Gaussians, and the location of the labeled points is non-symmetric with respect to the origin. Thus, a random walk process initiated from yellow points close to the origin  will reach a blue-labeled point before reaching a yellow one, with high probability. In contrast, consider the solution to the optimization problem in Eq. \eqref{eq:objective}. The leading non-constant eigenvector of $L$ (i.e., excluding the trivial eigenvector corresponding to eigenvalue 0) separates the two Gaussians. In addition, it has a perfect AUC score as it separates the blue and yellow labeled points. Thus, it minimizes the objective in Eq. \eqref{eq:objective}. The right panel in Fig. \ref{fig:GaussianMixture} shows the prediction made by AUC-spec (Alg. \ref{alg:auc_power_method}). Here, the border between the predictions goes through the origin, as expected.


\begin{figure*}
    \centering
    \includegraphics[width=0.8\linewidth]{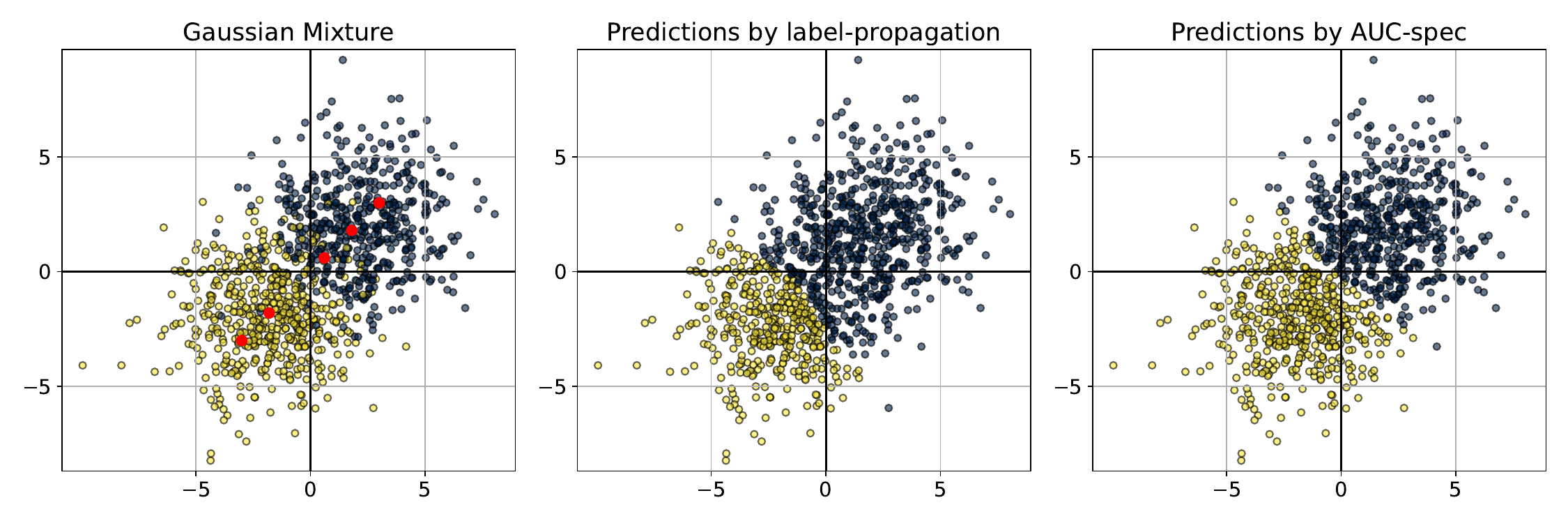}
    \caption{A Gaussian mixture model: (Left) Points are colored according to their class. Labeled points are colored in red. (Middle and right): points colored according to the outcome of label propagation (middle) and AUC-spec (right).}
    \label{fig:GaussianMixture}
\end{figure*}

\paragraph{A product-manifold model.}
The Gaussian-Mixture model is an example of a  cluster setting, where the leading (non-constant) eigenvector of $L$ perfectly separates the samples from the two classes. This implies that it is the solution $\hat \vv$ of \eqref{eq:objective}, as it minimizes both parts of the objective. 
In this example, we study the manifold setting where the label is set according to a continuous latent parameter. This case is more problematic as the leading eigenvectors may not be relevant to the class labels. To illustrate this phenomenon, we study the product-manifold setting. 

Let $\M$ be a low-dimensional manifold embedded in $p$ dimensions. In the manifold model, the high-dimensional observations $\vx_i \in \R^p$ can be approximated by a smooth, potentially nonlinear map $T$
of a small number of variables $\theta \in \R^d$ such that $\theta_i \xrightarrow{T}  \vx_i$. Let $\M^{(1)},\M^{(2)}$ be two manifolds with internal dimension $d_1$ and $d_2$, respectively, such that  $\theta^{(1)} \xrightarrow{T_1} \vx^{(1)}$, and $\theta^{(2)} \xrightarrow{T_2} \vx^{(2)}$. In a product-manifold, denoted $\M = \M^{(1)} \times \M^{(2)}$, there is a bijection between the points $\vx$ in the product, and pairs of points $\vx^{(1)} \in \M^{(1)}$ and $\vx^{(2)} \in \M^{(2)}$. A simple example of a product-manifold is obtained by concatenating the points $\vx^{(1)}$ and $\vx^{(2)}$. 

For simplicity, we assume that our data is generated at random according to a uniform distribution over a product of two manifolds, where the latent dimension of $\M^{(1)}$ is $d_1=1$, such that $\theta^{(1)}$ is a scalar.
We consider it instructive to examine the asymptotic setting where the number of samples $n \to \infty$. Here, under several assumptions regarding the kernel function in Eq. \eqref{eq:adaptive_scaling}, the Laplacian eigenvectors converge to samples of the eigenfunctions of the Laplace-Beltrami (LB) operator of the manifold. 
Due to a lack of space, we relegate a more detailed discussion of manifold products and the conditions for eigenvector convergence to the supplementary material. Here, we state two key properties of product-manifolds that are most relevant to our analysis.  
\begin{itemize}
    \item[\RNum{1}] There is an eigenvector $\vv_K$ of the graph Laplacian whose elements $(\vv_K)_i$ are monotonic with respect to $\theta^{(1)}$. 
    \item[\RNum{2}] The elements of eigenvectors $\vv_k$ for $k=1,\ldots,K-1$ are statistically independent of the elements of $\vv_K$.
\end{itemize}

We make the following additional assumptions:
\begin{itemize}[left=2pt]
    \item[(i)] The true labels of all points are set according to the value of $\theta^{(1)}$, such that $\vy_i = \mathds 1(\theta^{(1)}_i \geq 0)$.  For simplicity, we assume an equal number of positive and negative labeled points, sampled (separately) uniformly at random.  
    \item[(ii)] The elements of the leading eigenvectors of the Laplacian matrix $\vv_1,\ldots,\vv_K$ are relatively balanced, such that $(\vv_k)_i \leq  C/\sqrt{n}$, for all $i$ and vectors $k=1,\ldots,K$.
\end{itemize}

By property \RNum{1} and assumption (i), there is some threshold $\tau$ that satisfies $\vy_i = \mathds 1(\vv_K \geq \tau)$. Thus, the eigenvector $\vv_K$ is, in some sense, an ideal separator between the two classes in the data. 
Recall that $\lambda_K$ is the eigenvalue associated with $\vv_K$.
The following result follows immediately from the properties of $\vv_K$,
\begin{lemma}\label{lem:smoothness}
Let $\hat \vv$ denote the minimizer of objective Eq. \eqref{eq:objective}. Then $\hat \vv^T L \hat \vv \leq \lambda_K$ .
\end{lemma}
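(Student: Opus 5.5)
The argument is a comparison: the minimizer's objective value can be no worse than that of the candidate $\vv_K$, and this translates into the smoothness bound. Implicitly the minimization in Eq. \eqref{eq:objective} is over unit-norm vectors; this matches the normalization step in Alg. \ref{alg:auc_power_method}, and without it the problem is degenerate, since scaling $\vv$ down leaves the AUC term unchanged while driving the quadratic form to zero. I will make this constraint explicit, so that for a unit eigenvector we have $\vv_K^T L \vv_K = \lambda_K$.

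\textbf{Step 1: evaluate $J$ at $\vv_K$.} By property \RNum{1} and assumption (i), there is a threshold $\tau$ with $\vy_i = \mathds 1((\vv_K)_i \geq \tau)$. Hence every positive labeled point has a larger $\vv_K$-value than every negative labeled point, so $\AUC(\vy_\LL, (\vv_K)_\LL) = 1$ and $J(\vv_K) = \lambda_K - \gamma$. Ties at the threshold occur with probability zero under the continuous model, and in any case can be absorbed into the strict inequality by choosing $\tau$ between the two labeled groups. If monotonicity is decreasing rather than increasing, I replace $\vv_K$ by $-\vv_K$, which is still a unit eigenvector with the same eigenvalue.

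\textbf{Step 2: compare with the minimizer.} By optimality, $J(\hat\vv) \leq J(\vv_K)$, that is,
\[
\hat\vv^T L \hat\vv - \gamma\,\AUC(\vy_\LL,\hat\vv_\LL) \leq \lambda_K - \gamma .
\]
Since the AUC is at most $1$, we have $-\gamma\,\AUC(\vy_\LL,\hat\vv_\LL) \geq -\gamma$. Rearranging gives $\hat\vv^T L \hat\vv \leq \lambda_K - \gamma(1 - \AUC(\vy_\LL,\hat\vv_\LL)) \leq \lambda_K$.

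The only real obstacle is the normalization issue. The lemma is meaningful only with the unit-norm constraint, and I would state that assumption explicitly rather than prove anything deeper. If the sigmoid surrogate is used in place of the indicator, the same argument works after replacing $\vv_K$ by $c\,\vv_K$ in the comparison and taking the surrogate's upper bound $1$. However, the evaluation at $\vv_K$ then gives only an approximate value of $1$, which would weaken the bound to $\lambda_K$ plus a small slack; so I would prove the lemma for the ideal indicator objective as stated.
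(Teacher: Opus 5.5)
Your proposal is correct and is essentially the paper's own proof. The paper likewise gets $\AUC(\vy_\LL,(\vv_K)_\LL)=1$ from property \RNum{1} and assumption (i), uses $J(\hat\vv)\le J(\vv_K)=\lambda_K-\gamma$, and bounds $\AUC(\vy_\LL,\hat\vv_\LL)\le 1$. Your explicit treatment of the unit-norm normalization, the sign of $\vv_K$, and ties makes precise what the paper leaves implicit.
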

\begin{proof}
Property \RNum{1} and assumption (i) imply $\AUC(\vy_\LL,(\vv_K)_\LL) =1$. Since $J(\hat \vv) \leq J(\vv_K)$ then,
\[
\hat \vv^T L \hat \vv - \gamma \AUC(\vy_\LL,\hat \vv_\LL) \leq \vv_K^T L \vv_K -\gamma = \lambda_K - \gamma
\]
Rearranging the above inequality and inserting $\AUC(\vy_\LL,\hat \vv_\LL) \leq 1$ completes the proof.
\end{proof}
Lemma \ref{lem:smoothness} implies that the optimizer of Eq. \eqref{eq:objective} is a vector that is smooth with respect to the graph. The smoothness is measured by the total variation $\hat \vv^T L \hat \vv$. This is in contrast to Label propagation, where the result for $n \to \infty$ is a spiked vector.

Let $\VK$ denote the subspace spanned by the leading $K$ eigenvectors of the graph Laplacian matrix. 
In the next theorem, we address the following simpler optimization problem.
\begin{align}\label{eq:objective_K}
\hat \vv^{(K)} &= \argmin_{\vv \in \VK;\|\vv\|=1} J(\vv)
\notag \\
&= \argmin_{\vv \in \VK;\|\vv\|=1} \vv^T L\vv - \gamma \frac{4}{L^2} \sum_{\i \in \PP,j \in \N} \sigma(\vv_i-\vv_j).
\end{align}
Clearly, the outcome of Eq. \eqref{eq:objective_K} differs from that of Eq. \eqref{eq:objective}. The difference depends on the spectra of $L$ and the increase in the eigenvalues after $\lambda_K$. We focus on this simpler problem as the guarantees we derive provide important insights into our approach. Specifically, the guarantees show the benefit of having labeled points close to the border between the classes.  
We denote by $\Delta(\vv)$ the average, over all $L^2/4$ pairs of positive and negative labeled samples,
\begin{equation}\label{eq:delta}
\Delta(\vv) = \frac{4}{L^2} \sum_{i \in \PP,j \in \N} (\vv_i - \vv_j) 
= 
\frac{2}{L} \sum_{i \in \PP} \vv_i - \frac{2}{L} \sum_{i \in \N} \vv_j.
\end{equation}
In addition, we define $\kappa$ as a constant that satisfies, for $\vv_K$
\begin{equation}\label{eq:kappa}
\frac{4}{L^2}\sum_{i \in \PP,j \in \N} \sigma( (v_K)_i-(v_K)_j) = 1/2 + \kappa \Delta(\vv_k).
\end{equation}


The following theorem is proven in the supplementary material. 
\begin{theorem}\label{thm:product}
Let $\hat \vv^{(K)}$ be the minimizer of $J(\vv)$ in Eq. \eqref{eq:objective_K}. 
Under assumptions $(1)-(2)$, for the asymptotic case where $n \to \infty$, and $L = \OO(\log n K^2)$, the inner product between $\hat \vv$ and $\vv_K$ satisfies,
\begin{equation}
|\hat \vv^T \vv_K| \geq 4\kappa -4 \frac{\lambda_K}{\gamma \Delta(\vv_K)} - o(L) 
\end{equation}
With probability that goes to $1$. 
\end{theorem}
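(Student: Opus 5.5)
We compare the value of the objective at the minimizer $\hat \vv=\hat\vv^{(K)}$ with its value at the feasible competitor $\vv_K$, exactly as in Lemma \ref{lem:smoothness}, but now we keep track of the AUC term quantitatively instead of bounding it by $1$. Write $\hat \vv=\sum_{k=1}^K a_k \vv_k$ with $\sum_k a_k^2=1$. Since $J(\hat\vv)\le J(\vv_K)$, $\hat\vv^TL\hat\vv\ge 0$, and Eq. \eqref{eq:kappa} holds, we get
\[
\frac{4\gamma}{L^2}\sum_{i\in\PP,j\in\N}\sigma(\hat\vv_i-\hat\vv_j)\;\ge\;\gamma\Big(\tfrac12+\kappa\Delta(\vv_K)\Big)-\lambda_K .
\]
This lower-bounds the smoothed AUC of $\hat\vv$. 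The remaining work is to upper-bound the same quantity in terms of $a_K=\hat\vv^T\vv_K$.

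For the upper bound, we use that $\sigma$ is $1/4$-Lipschitz, with $\sigma(0)=1/2$ and $\sigma'(0)=1/4$. By assumption (ii), all entries satisfy $|\hat\vv_i|\le C\sqrt{K}/\sqrt n\to 0$. Hence a first-order expansion applies, $\sigma(\hat\vv_i-\hat\vv_j)=\tfrac12+\tfrac14(\hat\vv_i-\hat\vv_j)+O(K/n)$, and averaging gives
\[
\frac{4}{L^2}\sum_{i\in\PP,j\in\N}\sigma(\hat\vv_i-\hat\vv_j)=\tfrac12+\tfrac14\Delta(\hat\vv)+o(1).
\]
Since $\Delta$ is linear, $\Delta(\hat\vv)=a_K\Delta(\vv_K)+\sum_{k<K}a_k\Delta(\vv_k)$. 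Combining this with the lower bound and dividing by $\gamma\Delta(\vv_K)/4$ yields
\[
a_K+\sum_{k<K}a_k\frac{\Delta(\vv_k)}{\Delta(\vv_K)}\ \ge\ 4\kappa-\frac{4\lambda_K}{\gamma\Delta(\vv_K)}-o(1),
\]
where the constants $1/2$ cancel on both sides. This is the claimed inequality once the cross terms are shown to be negligible.

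The main obstacle is controlling the cross terms $\Delta(\vv_k)$ for $k<K$. By property \RNum{2}, the entries of $\vv_k$ are independent of $\vv_K$, and therefore independent of the labels, which are a threshold of $\theta^{(1)}$. So for the randomly sampled labeled sets $\PP,\N$, $\Delta(\vv_k)$ is a difference of two empirical means of i.i.d.\ bounded variables with the same mean. Each term is bounded by $C/\sqrt n$ by assumption (ii), so Hoeffding gives $|\Delta(\vv_k)|\le \frac{C}{\sqrt n}\sqrt{\log(K/\delta)/L}$ for all $k<K$ simultaneously, via a union bound over the $K-1$ vectors. In contrast, $\Delta(\vv_K)$ has a nonvanishing mean of order $1/\sqrt n$ by the monotonicity in property \RNum{1}. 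Applying Cauchy–Schwarz with $\sum a_k^2\le1$ bounds the cross term by $\sqrt{K}\max_{k<K}|\Delta(\vv_k)|/\Delta(\vv_K)=O\big(\sqrt{K\log K/L}\big)$. With $L$ of order $K^2\log n$ this vanishes, and the failure probability $\delta$ can be taken to $0$, giving the error term and the "with probability tending to $1$" clause.

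Finally, replacing $a_K$ by $|a_K|$ only weakens the left-hand side, and this gives the stated bound on $|\hat\vv^T\vv_K|$. The delicate points I would check carefully are the following.
\begin{itemize}
\item The Taylor remainder must be $o(\Delta(\vv_K))$, not merely $o(1)$. Both quantities are of order $1/\sqrt n$, so the quadratic remainder of order $1/n$ is indeed of smaller order.
\item The Hoeffding deviation must be measured relative to $\Delta(\vv_K)$. This requires $\Delta(\vv_K)$ to be bounded below by $c/\sqrt n$ with high probability, which again follows from concentration together with monotonicity.
\end{itemize}
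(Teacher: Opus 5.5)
Your proof is correct. It shares the paper's skeleton: compare $J(\hat\vv)$ with $J(\vv_K)$, replace the smoothed AUC by $\tfrac12+\tfrac14\Delta$, split $\Delta(\hat\vv)$ by linearity into its $\vv_K$ and $\VKM$ parts, and kill the latter by concentration using property II. Both technical ingredients differ from the paper's, and in each case yours is the sounder one.

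For the linearization, the paper uses the one-sided bound $\sigma(x)\le \tfrac12+\tfrac14 x$. That bound holds only for $x\ge 0$, yet the paper applies it to every pair, including pairs with $\hat\vv_i-\hat\vv_j<0$, where the inequality reverses. Your two-sided Taylor bound works because assumption (ii) makes every argument $O(\sqrt{K/n})$, so it holds for all pairs. The price is a remainder that must be small relative to $\Delta(\vv_K)$, which you check. The same expansion also shows $\kappa\to\tfrac14$ in this regime, so the leading term $4\kappa$ is effectively $1$.

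For uniform control over $\VKM$ (needed because $\hat\vv$ depends on the labeled sample), the paper applies Hoeffding on an $\epsilon$-net of $\VKM$ with $\epsilon=1/\sqrt n$. The union bound over $(3\sqrt n)^K$ net points, combined with the entrywise bound $C\sqrt{K/n}$, is what forces $L$ of order $K^2\log n$. You instead exploit linearity of $\Delta$, apply Hoeffding only to the $K-1$ basis vectors, and pass to all of $\VKM$ by Cauchy--Schwarz. This is more elementary and would already suffice with $L/(K\log K)\to\infty$.

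You also divide the perturbation by $\Delta(\vv_K)$, a step the paper's final display skips. The lower bound $\Delta(\vv_K)\gtrsim 1/\sqrt n$ that this division requires does hold, but it needs assumption (ii) and $\vv_K\perp\mathbf 1$, not monotonicity alone. Write $u=\sqrt n\,\vv_K$, let $m^\pm$ be its population class means, and set $\eta=m^+-m^-$. Monotonicity and $\sum_i u_i=0$ give $\frac1n\sum_i |u_i|\le 3\eta$. Unit norm and $|u_i|\le C$ give $\frac1n\sum_i |u_i|\ge 1/C$. Hence $\eta\ge 1/(3C)$, and Hoeffding transfers this to the labeled sample.
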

\noindent
Theorem \ref{thm:product} provides a lower bound on $|\hat \vv^T \vv_K|$ as a function of the following parameters: (i) The distance of the labeled points from the boundary, as captured by the parameter $\kappa$. 
As the positively and negatively labeled points $v_i,v_j$ get closer to the boundary, the value of $\sigma(v_i-v_j)$ converge to their linear approximation $0.5 + 0.25 (v_i-v_j)$, and $\kappa \to 0.25$, and the leading term in the bound converges to $1$. (ii) The ratio of $\lambda_K/\Delta(\vv_K)$. As $\lambda_K \to 0$, the bound improves, as expected. 
(iii) Importantly, the number of required labeled points scales as $K^2$, but depends only logarithmically in the total number of points $n$.

To illustrate this result, we generated a random set of $n=3000$ points according to a uniform distribution over the rectangle $[0,1] \times [0,\beta]$, where $\beta<1$. We set the true labels by the second coordinate such that $y_i = \mathds 1( (\vx_i)_2 \geq \beta/2)$. We repeated the experiment with $\beta=0.5,0.3,0.15$. For $\beta = 0.5$, the label vector is almost perfectly aligned with the third (non-trivial) eigenvector. Setting $\alpha = 0.15$ yields a harder problem where 
the labels are imperfectly correlated with the $7$-th and $8$-th eigenvectors. Fig. \ref{fig:rectangle} shows the AUC score over the full set of points as a function of the number of labeled points for the three values of $\beta$. Achieving, for example, an AUC score of $0.9$ requires only $8$ labels for $\alpha= 0.5$, $14$ points for $\alpha = 0.3$, and roughly $30$ points for $\alpha = 0.15$.

\begin{figure}
    \centering
    \includegraphics[width=0.9\linewidth]{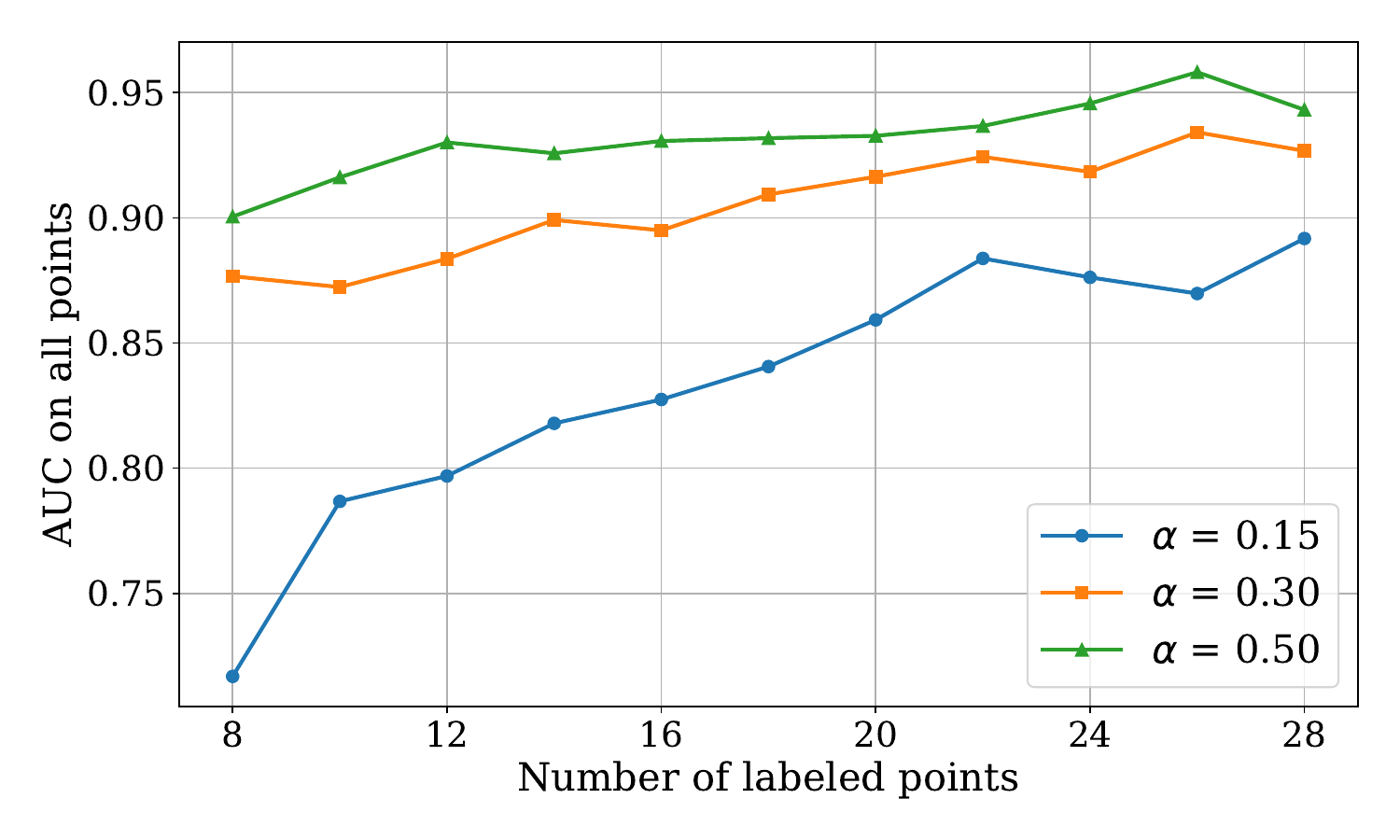}
    \caption{Points sampled over a rectangle of size $[0,1] \times [0,\alpha]$. The labels are set according to the second coordinate. The results show the AUC score, over the unlabeled points, of the vector computed by AUC-spec.}
    \label{fig:rectangle}
\end{figure}

\section{Experimental results}\label{sec:results}

\begin{table*}
\centering
\small
\begin{tabular}{l ccccc ccccc }
\toprule
\makecell{ Methods} & \multicolumn{5}{c}{AUC} & \multicolumn{5}{c}{Accuracy} \\
 & 8 & 10 & 12 & 20 & 200 & 8 & 10 & 12 & 20 & 200 \\
\midrule
AUC-Spec & \textbf{0.797} & 0.780 & \textbf{0.815} & \textbf{0.831} & \textbf{0.896} & \textbf{0.708} & 0.694 & \textbf{0.738} & \textbf{0.747} & \textbf{0.809} \\
Label propagation & 0.789 & 0.767 & 0.810 & 0.817 & 0.877 & 0.506 & 0.510 & 0.504 & 0.501 & 0.602 \\
Top eigenvectors & 0.748 & 0.727 & 0.759 & 0.797 & 0.874 & 0.687 & 0.675 & 0.701 & 0.732 & 0.797 \\
p laplace & 0.762 & 0.781 & 0.799 & 0.829 & 0.893 & 0.700 & 0.708 & 0.721 & 0.745 & 0.806 \\
Poisson learning & 0.770 & \textbf{0.785} & 0.793 & 0.825 & 0.874 & 0.707 & \textbf{0.711} & 0.715 & 0.742 & 0.782 \\
\bottomrule
\end{tabular}
\caption{AUC and accuracy for the Fashion MNIST dataset as a function of the number of labeled points}
\label{tab:fashion_mnist}
\end{table*}

\begin{table*}[htb]
\centering
\small
\begin{tabular}{l ccccc ccccc }
\toprule
\makecell{ Methods} & \multicolumn{5}{c}{AUC} & \multicolumn{5}{c}{Accuracy} \\
 & 8 & 10 & 12 & 20 & 200 & 8 & 10 & 12 & 20 & 200 \\
\midrule
AUC-Spec & \textbf{1.4} & 2.0 & \textbf{1.6} & \textbf{2.0} & 2.8 & \textbf{1.6} & 2.6 & \textbf{1.8} & \textbf{2.0} & \textbf{2.0} \\
Label propagation & 3.2 & 3.2 & 3.0 & 3.8 & 2.8 & 3.8 & 3.4 & 3.8 & 3.8 & 4.2 \\
Top eigenvectors & 4.6 & 4.8 & 5.0 & 4.8 & 4.0 & 4.2 & 4.0 & 4.0 & 3.8 & 3.2 \\
p\_laplace\_learn & 3.8 & 3.4 & 2.8 & 2.4 & \textbf{2.0} & 3.2 & 3.0 & 2.8 & 2.8 & 3.0 \\
Poisson learning & 2.0 & \textbf{1.6} & 2.6 & \textbf{2.0} & 3.4 & 2.2 & \textbf{2.0} & 2.6 & 2.6 & 2.6 \\
\bottomrule
\end{tabular}
\caption{Mean rank of methods over five datasets as a function of the number of labeled points.}
\label{tab:rank}
\end{table*}

\begin{table}[h]
    \centering
    \setlength{\tabcolsep}{3.5pt}
    \vspace{1em}
    \begin{tabular}{lccccc}
        \toprule
        \# labeled points & \textbf{8} & \textbf{10} & \textbf{12} & \textbf{20} & \textbf{200} \\
        \midrule
        AUC-Spec (Ours) & 85.2 & 63.4 & 52.7 & 19.0 & 2.7 \\
        Label Propagation & 89.2 & 91.0 & 89.6 & 90.5 & 38.4 \\
        Top Eigen & 0.01 & 0.01 & 0.02 & 0.02 & 0.03 \\
        P Laplace Learn & 58.3 & 58.6 & 58.5 & 58.7 & 58.9 \\
        Poisson Learn & 132.8 & 134.0 & 133.5 & 132.1 & 134.0 \\
        \bottomrule
    \end{tabular}
    \vspace{1em}    
    \caption{Average run time in seconds over 10 trials.}
    \label{tab:run_time_results}
\end{table}

In this section, we evaluate our method's performance applied to different binary classification tasks on various data sets. 
We compare our method against the following established graph-based semi-supervised learning algorithms: (i) Label Propagation (\cite{zhu2003semi}), (ii) Leading Eigenvectors (\cite{belkin2004semi}), (iii) $p$-Laplace learning (\cite{flores2022analysis}), and (iv) Poisson Learning (\cite{calder2020poisson}).
In all our experiments, we ran 50 trials, randomly choosing which data points are labeled.
As for the methods' hyperparameter, in $p$-Laplace learning we used $p=5$ (in \cite{flores2022analysis} original paper, the experiments were applied with $p=5$ and $p=9$). 
For the leading eigenvectors approach, we used the top $k=5$ vectors, which empirically yielded robust and competitive results. In our method, we applied 20 power iterations with step size $\gamma=0.1$, followed by iterations with $\gamma = 0.01$ until convergence. 
For Label Propagation, we used the implementation available at the scikit-learn Python package. \footnote {\url{https://scikit-learn.org/stable/modules/generated/sklearn.semi_supervised.LabelPropagation.html}} For Poisson Learning and $p$-Laplace, we used the implementation provided by the authors in the GraphLearning Python package.\footnote{\url{https://github.com/jwcalder/GraphLearning}}
For all methods and all datasets, we constructed a weighted graph as in Eq. \eqref{eq:adaptive_scaling}. The adaptive scaling $d_K(\mathbf{x}_i)$ was set according to the $20$-th nearest neighbors. 
We evaluate performance by two criteria measured over the unlabeled points: (i) AUC of the continuous predictive vector with respect to the true labels $\mathbf{y}$ and (ii) the accuracy of the predicted labels. 

\subsection{Description of the datasets}
We tested all methods on five datasets: (i) MNIST \cite{lecun2010mnist}, digits 8,9. (ii) Fashion Mnist \cite{xiao2017fashion}, items "T-shirt/top". (iii) ring-of-Gaussians \cite{holtz2023semi} (iv) Magic Gamma Telescope \cite{magic_gamma_telescope_159} signal vs. background, and (v) CIFAR-10 \cite{krizhevsky2009learning} color images, classes  automobile and cat. We used a publicly available embedded images taken from \citet{calder2020poisson}. The autoencoder architecture applied to compute the embedding is from \cite{zhang2019aet}.

In all datasets, we randomly selected $n=10000$ with an equal proportion for the two classes. The size of the Ring of Gaussians dataset remains $n=3000$. We performed experiments with $8,10,12,20$ and $200$ labeled points, with an equal number of points for the two classes.

\subsection{Discussion on simulation results}
Tables \ref{tab:fashion_mnist} in the main text and tables \ref{tab:magic_dataset},\ref{tab:mnist},\ref{tab:ring_of_gaussians} and \ref{tab:cifar_dataset} in the supplementary material report the AUC and prediction accuracy of all methods across the five benchmark datasets and for various number of labeled points. 
Table \ref{tab:rank} presents the mean rank (1-5, where smaller means higher accuracy) over all datasets. Though the ranking of AUC-spec varies, it ranks either first or close second in the vast majority of experiments.  
The advantage over other methods is particularly significant in small number of labels. 
The AUC ranking of our approach seems to be better than the accuracy ranking, which suggests the results may be improved by a better selection of a threshold over the predictive vector. 

\paragraph{Runtime comparison}
To assess computational efficiency, we measured the average runtime of all methods on a graph with 10,000 nodes constructed from the MNIST dataset. The number of labeled points varied from 8 to 200, as in other experiments. Each method was allowed a maximum of 1,000 iterations with a tolerance of $1e-4$, and runtimes were averaged over 10 independent trials conducted on a standard CPU. The results are in Table \ref{tab:run_time_results}. 
The runtime of our approach reduces with the number of labels, which suggests that our approach converges faster in this regime. The same phenomenon occurs for Label Propagation, though the difference is not as significant as in AUC-spec. The runtimes of P-Laplace and Poisson learning are not affected by the number of labeled points. 

\section{Conclusion and Future work}
In this work, we developed AUC-spec, a method that combines graph representation with the goal of maximizing class discrimination. We believe this approach opens up several promising directions for future research. For example, one could develop new methods by integrating AUC-spec with more robust graph-based techniques, such as p-Laplacian or Poisson learning. Future work may also explore adaptive schemes for jointly tuning the tradeoff between smoothness and discrimination, as well as scalable algorithms for large-scale graphs. Finally, extending the framework to optimize other discrimination-aware metrics—such as partial AUC or fairness-aware objectives—could further enhance its applicability to real-world semi-supervised learning scenarios.


\newpage

\bibliography{aaai2026}
\bibliographystyle{ICML/icml2026}

\newpage
\appendix
\onecolumn

\appendix

\section{Proof of Theorem \ref{thm:product}}.
For the proof of Theorem \ref{thm:product}, we use a concentration result, stated in the following auxiliary lemma.
\begin{lemma}\label{lem:aux_concentration}
    Let $\mathbf r$ denote a set of $L$ indices sampled at random from $1,\ldots,n$ without replacement and let $\Delta_r = \frac{1}{L} \sum_{i \in \mathbf r} \vv_{i}$ denote the average of corresponding elements in $\vv$. Then
    for any $\vv \in \VKM$ we have
    \[
    |\vv_i| \leq \tilde C \frac{\sqrt{K}}{\sqrt{n}}. 
    \]    
\end{lemma}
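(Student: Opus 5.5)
The statement as written only asserts an entrywise bound $|\vv_i| \leq \tilde C\sqrt{K}/\sqrt{n}$ for vectors in $\VKM$. I read it as being about unit vectors, since the problem in Eq. \eqref{eq:objective_K} imposes $\|\vv\|=1$ and the bound is not scale invariant. The concentration part, about $\Delta_r$, is not actually asserted, so I will prove the entrywise bound directly and then note how it feeds a Hoeffding-type bound on $\Delta_r$.

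The plan is to expand $\vv$ in the orthonormal eigenbasis. I will write
\[
\vv = \sum_{k=1}^{K-1} a_k \vv_k, \qquad \sum_{k=1}^{K-1} a_k^2 = \|\vv\|^2 = 1,
\]
using that the Laplacian eigenvectors $\vv_1,\ldots,\vv_{K-1}$ are orthonormal.

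For a fixed coordinate $i$ I will apply the triangle inequality and then Cauchy--Schwarz:
\[
|\vv_i| \leq \sum_{k=1}^{K-1} |a_k|\,|(\vv_k)_i| \leq \Big(\sum_k a_k^2\Big)^{1/2}\Big(\sum_{k=1}^{K-1} (\vv_k)_i^2\Big)^{1/2}.
\]
Next I will invoke assumption (ii), $|(\vv_k)_i|\leq C/\sqrt n$. This bounds the second factor by $\sqrt{K-1}\,C/\sqrt n$, which gives the claim with $\tilde C = C$.

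This step is where I expect the only real subtlety. Assumption (ii) is stated one-sided as $(\vv_k)_i \leq C/\sqrt n$, and I need it in absolute value. The fix is to apply the bound to both $\vv_k$ and $-\vv_k$, since eigenvectors are defined only up to sign. I also need to state the normalization $\|\vv\|=1$ explicitly.

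For the intended use, each term in $\Delta_r$ is bounded by $\tilde C\sqrt{K/n}$. Hoeffding's inequality for sampling without replacement, which is valid by Hoeffding's comparison with sampling with replacement, then gives
\[
|\Delta_r - \bar\vv| \lesssim \sqrt{K/n}\,\sqrt{\log(1/\delta)/L}.
\]
A union bound over an $\epsilon$-net of the unit sphere in $\VKM$, whose size is $(3/\epsilon)^{K}$, then yields the uniform bound behind the requirement $L=\OO(K^2\log n)$.
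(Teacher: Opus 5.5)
Your proof is correct and follows the paper's route. The paper also gets the entrywise bound $|\vv_i|\le C\sqrt{K}/\sqrt{n}$ directly from assumption (ii); your Cauchy--Schwarz step with $\|\vv\|=1$, and your two-sided reading of (ii), are exactly the justification it leaves implicit. It then controls the sampled average as you sketch, using Hoeffding's inequality for sampling without replacement plus a union bound over an $\epsilon$-net of size $(3/\epsilon)^K$, and your centering of $\Delta_r$ at $\bar\vv$ with the resulting $\sqrt{K/n}\,\sqrt{\log(1/\delta)/L}$ scale is more careful than the paper's version of that step.
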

\begin{proof}
The proof relies on the Hoeffding concentration inequality for the case of sampling without replacement, and a simple $\epsilon$-net argument.
Since the elements of the leading eigenvectors are bounded by $C/\sqrt{n}$, we have that for any vector $\vv \in \VKM$, its elements are bounded by $|v_i|\leq C\frac{\sqrt{K}}{{\sqrt{n}}}$.
For a fixed vector $\vv \in \VKM$, the Hoeffding bound (see for example proposition 1.2 in \cite{bardenet2015concentration}) states that
\[
\Pr\bigg(\big|\frac{1}{L} \sum_{i \in \mathbf r} v_i\big|\geq t\bigg) \leq 2 \exp\bigg(-\frac{-t^2 L}{B^2}\bigg),
\]
where $B = C\frac{\sqrt{K}}{{\sqrt{n}}}$ is the bound on the elements of $\vv$.
Next, consider an $\epsilon$-net on the subspace of $\VKM$. The size of the net is bounded by $(3/\epsilon)^K$ (See for example \cite{vershynin2018high}, corollary 4.2.11). Using the union bound over the net, we have that,
\[
\Pr\bigg(\big|\frac{1}{L} \sum_{i \in \mathbf r} v_i\big|\geq t\bigg) \leq 2 (3/\epsilon)^K \exp\bigg(-\frac{-t^2 L}{B^2}\bigg).
\]
for all vectors $\vv$ in the net. 
Finally, the difference in the average $\frac{1}{L} v_i$ between any vector in $\VKM$ and its closest neighbor in the net is bounded is bounded by $\epsilon/L$.
Thus,
\[
\Pr\bigg(\big|\frac{1}{L} \sum_{i \in \mathbf r} v_i\big|\geq t+\epsilon/L\bigg) \leq 2 (\epsilon/3)^K \exp\bigg(-\frac{-t^2 L}{B^2}\bigg).
\]
We set $\epsilon = 1/\sqrt{n}$ and $t = \frac{\alpha}{\sqrt{L n}}$ For $L =\OO(K^2 \log n)$ we have that there is some constant $C'$
\[
\Pr\bigg(\big|\frac{1}{L} \sum_{i \in \mathbf r} v_i\big|\geq C' \frac{1}{L\sqrt{n}}\bigg)\leq \exp(-C'). 
\]
\end{proof}

Next, we prove the main theorem in four steps.
As mentioned, for simplicity we assume that the number of positive labeled and negative labeled is equal to $L/2$. The loss function is defined as,
\[
\ell(\vv) = \vv^T L \vv -\gamma \frac{4}{L^2}\sum_{i \in \PP,j \in \N} \sigma(v_i-v_j) 
\]


\begin{itemize}
    \item Step 1: For $\vv_k$, by the definition of $\kappa$ in Eq. \eqref{eq:kappa} 
    \begin{equation}\label{eq:upper_bound}
    \ell(\vv) = \vv^T L \vv -\gamma \frac{4}{L^2}\sum_{i \in \PP,j \in N} \sigma( (v_K)_i-(v_K)_j) = \lambda_K - \gamma/2 - \gamma \kappa \Delta(\vv_k),
    \end{equation}
    where recall from Eq. \eqref{eq:delta} that $\Delta(\vv_K)$ is equal to
    \begin{equation}
    \Delta(\vv_k) = \frac{4}{L^2} \sum_{i \in \PP,j \in N} (v_K)_i-(v_K)_j = \frac2L \sum_{i \in \PP} (v_K)_i - \frac2L \sum_{j \in \N} (v_K)_j
    \end{equation}
    \item Step 2: We consider a vector $\vv_\alpha$ equal to
    \[
    \vv_\alpha = \alpha \vv_K + \beta \vv  \qquad \text{such that} \qquad \beta^2+\alpha^2 = 1
    \]
    where $\vv \in \VKM$. 
    We use the concavity fact that $\sigma(x)$ is monotonicly increasing and the inequality $\sigma(x) \leq 0.5 + 0.25x$ for $x \geq 0$ to bound the loss function of $\vv_\alpha$ via
    \begin{align}\label{eq:lower_bound}
    \ell(\vv_\alpha) &\geq \alpha^2 \lambda_K - \gamma \frac{4}{L^2}\sum_{i \in \PP j  \in \N} \sigma(\alpha (v_K)_i +\beta \vv_i - \alpha  (v_K)_j -\beta \vv_j) 
    \notag \\
    & \geq \alpha^2\lambda_K -\gamma \frac{4}{L^2} \sum_{i \in \PP j  \in \N} \Big( \frac12 + 
    \frac14 (\alpha (v_K)_i +\beta \vv_i - \alpha  (v_K)_j -\beta \vv_j) \Big) 
    \notag \\
    &= 
    \alpha^2\lambda_K -\gamma \frac{1}{2} - \gamma \alpha \frac{1}{4}\Delta(\vv_K) - \frac14 \gamma \beta \Delta (\vv)
    \end{align}
    \item step 3: Here we use the properties of the product of manifold. The vector $\vv \in \VKM$  is independent of $\theta_K$ and hence independent of the labels $y$. 
    Thus, we can use the concentration result derived in lemma \ref{lem:aux_concentration}. The lemma proves a bound on the average of $L$ random elements sampled without replacement from \textbf{any} vector $\vv \in \VKM$. Thus, we can assume that under the assumption of the theorem, we have that $\Delta(\vv)$ is bounded by $\frac{C'}{\sqrt{n}L}$ with probability at least $\exp(-C')$. 
    
    \item Step 4: Consider the case where $\Delta(\vv)=0$, and assume that $l(\vv_\alpha)<l(\vv_K)$. By combining the bounds from Eqs. \eqref{eq:upper_bound} and \eqref{eq:lower_bound}
    \[
    \lambda_K - \gamma \kappa \Delta(\vv_K) - \alpha^2 \lambda_K + \gamma \alpha \frac{1}{4} \Delta (\vv_K) \geq 0. 
    \]
    This amounts to a quadratic equation in $\alpha$. For the above expression to be positive,
    a necessary condition is that $\alpha > 4\kappa - 4\lambda_K/(\gamma\Delta(\vv_K)$. Otherwise
    \[
    \lambda_K(1-\alpha^2) -\gamma \Delta(\vv_K)(\kappa - \alpha/4) < \lambda_K -\gamma \Delta(\vv_K)(\kappa - \alpha/4) <0.
    \]
    Finally, adding the perturbation bound derived in step 3, the altered requirement is that
    \[
    \alpha > 4\kappa - 4\frac{\lambda_K}{\gamma \Delta(\vv_K)} - \frac{4C'}{L\sqrt{n}}.
    \]
\end{itemize}

\section{Experiments.}

Here, we present the results on datasets that were not shown in the main file. 

\begin{table*}[ht]
\centering
\small
\begin{tabular}{l ccccc ccccc }
\toprule
\makecell{ Methods} & \multicolumn{5}{c}{AUC} & \multicolumn{5}{c}{Accuracy} \\
 & 8 & 10 & 12 & 20 & 200 & 8 & 10 & 12 & 20 & 200 \\
\midrule
AUC-Spec & 0.631 & 0.628 & 0.630 & 0.652 & 0.774 & \textbf{0.643} & 0.627 & \textbf{0.651} & \textbf{0.678} & 0.762 \\
Label propagation & 0.591 & 0.579 & 0.593 & 0.653 & 0.789 & 0.614 & \textbf{0.636} & 0.641 & 0.649 & 0.681 \\
Top eigenvectors & 0.561 & 0.571 & 0.582 & 0.601 & 0.685 & 0.571 & 0.585 & 0.601 & 0.627 & 0.706 \\
p laplace & 0.616 & 0.635 & 0.634 & 0.659 & \textbf{0.793} & 0.622 & 0.627 & 0.644 & 0.668 & \textbf{0.766} \\
Poisson learning & \textbf{0.635} & \textbf{0.641} & \textbf{0.642} & \textbf{0.667} & 0.779 & 0.607 & 0.609 & 0.607 & 0.627 & 0.724 \\
\bottomrule
\end{tabular}
\caption{AUC and accuracy for the Magic dataset as a function of the number of Labeled points}
\label{tab:magic_dataset}
\end{table*}

\begin{table*}
\centering
\small
\begin{tabular}{l ccccc ccccc }
\toprule
\makecell{ Methods} & \multicolumn{5}{c}{AUC} & \multicolumn{5}{c}{Accuracy} \\
 & 8 & 10 & 12 & 20 & 200 & 8 & 10 & 12 & 20 & 200 \\
\midrule
AUC-Spec & 0.947 & 0.951 & \textbf{0.968} & 0.976 & 0.990 & 0.874 & 0.879 & \textbf{0.908} & 0.919 & 0.928 \\
Label propagation & 0.945 & 0.948 & 0.966 & 0.975 & 0.990 & 0.494 & 0.494 & 0.494 & 0.494 & 0.503 \\
Top eigenvectors & 0.865 & 0.869 & 0.898 & 0.914 & 0.969 & 0.777 & 0.780 & 0.823 & 0.845 & 0.918 \\
p laplace & 0.941 & 0.939 & 0.956 & 0.975 & \textbf{0.992} & 0.790 & 0.815 & 0.844 & 0.856 & 0.915 \\
Poisson learning & \textbf{0.950} & \textbf{0.954} & 0.960 & \textbf{0.980} & 0.991 & \textbf{0.877} & \textbf{0.888} & 0.890 & \textbf{0.927} & \textbf{0.952} \\
\bottomrule
\end{tabular}
\caption{AUC and accuracy for the MNIST dataset as a function of the number of Labeled points}
\label{tab:mnist}
\end{table*}

\begin{table*}
\centering
\small
\begin{tabular}{l ccccc ccccc }
\toprule
\makecell{ Methods} & \multicolumn{5}{c}{AUC} & \multicolumn{5}{c}{Accuracy} \\
 & 8 & 10 & 12 & 20 & 200 & 8 & 10 & 12 & 20 & 200 \\
\midrule
AUC-Spec & \textbf{0.797} & 0.780 & \textbf{0.815} & \textbf{0.831} & \textbf{0.896} & \textbf{0.708} & 0.694 & \textbf{0.738} & \textbf{0.747} & \textbf{0.809} \\
Label propagation & 0.789 & 0.767 & 0.810 & 0.817 & 0.877 & 0.506 & 0.510 & 0.504 & 0.501 & 0.602 \\
Top eigenvectors & 0.748 & 0.727 & 0.759 & 0.797 & 0.874 & 0.687 & 0.675 & 0.701 & 0.732 & 0.797 \\
p laplace & 0.762 & 0.781 & 0.799 & 0.829 & 0.893 & 0.700 & 0.708 & 0.721 & 0.745 & 0.806 \\
Poisson learning & 0.770 & \textbf{0.785} & 0.793 & 0.825 & 0.874 & 0.707 & \textbf{0.711} & 0.715 & 0.742 & 0.782 \\
\bottomrule
\end{tabular}
\caption{AUC and accuracy for the Fashion MNIST dataset as a function of the number of Labeled points}
\label{tab:fashion_mnist}
\end{table*}

\begin{table*}[ht]
\centering
\small
\begin{tabular}{l ccccc ccccc }
\toprule
\makecell{ Methods} & \multicolumn{5}{c}{AUC} & \multicolumn{5}{c}{Accuracy} \\
 & 8 & 10 & 12 & 20 & 200 & 8 & 10 & 12 & 20 & 200 \\
\midrule
AUC-Spec & \textbf{0.842} & \textbf{0.887} & \textbf{0.925} & \textbf{0.982} & 0.995 & 0.677 & 0.731 & 0.800 & 0.928 & 0.986 \\
Label propagation & 0.579 & 0.655 & 0.740 & 0.950 & \textbf{1.000} & \textbf{0.693} & \textbf{0.754} & \textbf{0.825} & \textbf{0.960} & \textbf{0.992} \\
Top eigenvectors & 0.622 & 0.652 & 0.648 & 0.660 & 0.679 & 0.655 & 0.675 & 0.663 & 0.655 & 0.601 \\
p laplace & 0.581 & 0.586 & 0.705 & 0.844 & 0.990 & 0.622 & 0.666 & 0.745 & 0.821 & 0.981 \\
Poisson learning & 0.824 & 0.860 & 0.896 & 0.966 & 0.999 & 0.664 & 0.704 & 0.779 & 0.859 & 0.990 \\
\bottomrule
\end{tabular}
\caption{AUC and accuracy for the Ring of Gaussians dataset as a function of the number of Labeled points}
\label{tab:ring_of_gaussians}
\end{table*}

\begin{table*}[ht]
\centering
\small
\begin{tabular}{l ccccc ccccc }
\toprule
\makecell{ Methods} & \multicolumn{5}{c}{AUC} & \multicolumn{5}{c}{Accuracy} \\
 & 8 & 10 & 12 & 20 & 200 & 8 & 10 & 12 & 20 & 200 \\
\midrule
AUC-Spec & \textbf{0.968} & \textbf{0.981} & 0.981 & 0.984 & 0.998 & 0.887 & 0.887 & 0.907 & 0.908 & 0.934 \\
Label propagation & 0.968 & 0.980 & 0.976 & 0.977 & 0.990 & 0.500 & 0.503 & 0.501 & 0.501 & 0.510 \\
Top eigenvectors & 0.943 & 0.961 & 0.970 & 0.979 & \textbf{0.998} & 0.878 & 0.910 & 0.921 & \textbf{0.940} & \textbf{0.986} \\
p laplace & 0.962 & 0.979 & \textbf{0.982} & \textbf{0.987} & 0.998 & 0.885 & 0.914 & 0.914 & 0.922 & 0.928 \\
Poisson learning & 0.962 & 0.980 & 0.977 & 0.983 & 0.989 & \textbf{0.903} & \textbf{0.927} & \textbf{0.928} & 0.930 & 0.932 \\
\bottomrule
\end{tabular}
\caption{AUC and accuracy for the Cifar dataset as a function of the number of Labeled points}
\label{tab:cifar_dataset}
\end{table*}


\end{document}